\definecolor{link}{rgb}{0.45,0.51,0.67}
\renewcommand{\cftchappresnum}{\chaptername\space}
\newcommand\chap[1]{%
  \chapter*{#1}%
  \addcontentsline{toc}{chapter}{#1}}
\newtheorem{theorem}{Theorem}[section]
\begin{document}
\setlength{\parskip}{3pt}
\baselineskip=24pt
%
\pagenumbering{roman}
%
\thispagestyle{empty}
\baselineskip=18pt
\begin{center}
\vspace*{3\baselineskip}
%
{\bfseries Malicious Network Traffic Detection via Deep Learning: \\An Information Theoretic View}\\[6\baselineskip]
by\\
%
Erick Galinkin\\[3\baselineskip]
A thesis submitted to The Johns Hopkins University in conformity\\
with the requirements for the degree of \\
Master of Science in Applied and Computational Mathematics\\[4\baselineskip]
Baltimore, Maryland\\
August, 2020\\[6\baselineskip]
%
{\copyright{} 2020 E.~Galinkin\\
All rights reserved}
\end{center}
%
\baselineskip=24pt
\newpage 

%
\pagestyle{plain}
\setcounter{page}{2}
%
\chapter*{Abstract}
\addcontentsline{toc}{chapter}{Abstract}
The attention that deep learning has garnered from the academic community and industry continues to grow year over year, and it has been said that we are in a new golden age of artificial intelligence research.
However, neural networks are still often seen as a ``black box'' where learning occurs but cannot be understood in a human-interpretable way.
Since these machine learning systems are increasingly being adopted in security contexts, it is important to explore these interpretations.
We consider an Android malware traffic dataset for approaching this problem.
Then, using the information plane, we explore how homeomorphism affects learned representation of the data and the invariance of the mutual information captured by the parameters on that data.
We empirically validate these results, using accuracy as a second measure of similarity of learned representations.

Our results suggest that although the details of learned representations and the specific coordinate system defined over the manifold of all parameters differ slightly, the functional approximations are the same.
Furthermore, our results show that since mutual information remains invariant under homeomorphism, only feature engineering methods that alter the entropy of the dataset will change the outcome of the neural network.
This means that for some datasets and tasks, neural networks require meaningful, human-driven feature engineering or changes in architecture to provide enough information for the neural network to generate a sufficient statistic.
Applying our results can serve to guide analysis methods for machine learning engineers and suggests that neural networks that can exploit the convolution theorem are equally accurate as standard convolutional neural networks, and can be more computationally efficient.
%
\section*{Thesis Readers}
\begin{singlespace}
\noindent Dr.~Cleon Davis (Primary Advisor)\\
\indent \indent Senior Professional Staff\\
\indent \indent Johns Hopkins University Applied Physics Laboratory\\

\indent \indent and\\

\indent \indent Program Vice Chair Electrical and Computer Engineering\\
\indent \indent Lecturer and Research Faculty in Applied and Computational Mathematics\\
\indent \indent Johns Hopkins Engineering for Professionals\\

\noindent Dr.~Lanier Watkins\\
\indent \indent Program Chair\\
\indent \indent Department of Computer Science and Cybersecurity\\
\indent \indent Johns Hopkins University\\
\end{singlespace}
%
%
%
%
\chapter*{Acknowledgements}
\addcontentsline{toc}{chapter}{Acknowledgements}

I would like to extend my deepest gratitude to my thesis adviser, Dr. Cleon Davis, and my co-adviser Dr. Lanier Watkins.
I have learned so much in this process, and you both have been incredibly supportive. 
I hope that in the future I will make you both proud to call me a peer.

I would also like to extend my gratitude to both Dr. Raymond Canzanese of Netskope, and Mr. Derek Abdine, my two work supervisors during my time at Johns Hopkins.
You have both always been patient, understanding, and flexible.

A debt of gratitude is also owed to Ariel Herbert-Voss of OpenAI and Harvard University, and one of the most inspiring hackers I know for her contribution to the mutual information portion of this thesis.

I would like to honor my colleague and friend, Stella Biderman, for the time she spent hearing my ideas and helping me contextualize them. 

My undying gratitude also goes to Will Pearce, who read through every line of this thesis when it was not quite ready for prime time and helped me get it into a form which I can be proud of.

Lastly, thanks my friends Gloria Cho; Dave Roberts; the entire Rupeethon crew (Jon, Ian, Rich, Conor, Lynn, Scott, Steve, Koios); and my friends in the DEF CON AI village (Sven, Rich, Yaga, Rob, Dr. deltazero, Jason, and all the others). 
No matter how galaxy-brained I got, you always listened and kept me inspired.
Thank you all so much.
%
%
\tableofcontents
%
%
\listoffigures
\clearpage
%
%
\pagenumbering{arabic}
%
\chapter{Introduction}
\label{chap:intro}

Malicious software (malware), has long been a burden on users of computers and the internet. 
For individuals, a malware attack can cause the loss of their personal data and may allow hackers to access their bank accounts, steal their identity, or hold all of the files on their computer ransom. 
For a business, the effects can be extremely dire, with the average cost of a malware attack sitting around \$1.7 million~\cite{seals2019threatlist}. 
This problem is continuing to grow, and malware authors innovate in an effort to circumvent existing mitigating controls. 
As the IRA said to Margaret Thatcher after the Brighton Hotel Bombing: ``We only have to be lucky once. You will have to be lucky always''~\cite{thomas1984this}. 
So too is the case for malware, which must only find one vulnerable system to cause immense damage, while defenders must be lucky on all of their systems. 
As such, traditional signature-based antivirus engines have become less effective against unseen strains of malware~\cite{oconnor2017how}, and so machine learning techniques have helped provide detection of these threats on the endpoint.

In the case of mobile malware, we often do not have the luxury of running computationally intensive processes on an endpoint, nor do we have the ability to remove malware from devices that we do not own that are brought onto our networks. 
As such, we need to leverage this same machine learning technology to perform endpoint-agnostic network detection of malware threats. 
In these cases, a network-based solution seems ideal, as this allows us to mitigate vectors commonly used by modern worms~\cite{mohurle2017brief}. 
Traditional intrusion detection systems like Snort will continue to mitigate known threats, but these signature-based systems suffer from the same curse of reactivity as the traditional antivirus. 
In order to mitigate these threats, mitigating controls must be learned on the fly instead of reactively post-hoc by a human analyst.

Machine learning is a form of statistical learning that emphasizes predictions and pattern recognition in data~\cite{james14introduction}.
Generally, machine learning is viewed as a subset of artificial intelligence where algorithms build mathematical models based on sample data in order to make predictions without being programmed to do so~\cite{bishop2006pattern}.
Machine learning is broadly divided into 3 categories: 
Supervised learning, where a function to map data to a set of associated labels is learned;
Unsupervised learning, where groupings or clusters of the data are identified without labels;
and Reinforcement learning, which is concerned with experiential learning through agent interactions with an environment.
Though there are more granular categories, these suffice for most purposes.
In our case, we are concerned with a classification task for data where we do have labels - so we concern ourselves with supervised learning throughout this work. 

\section{Neural Networks}\label{intro:nn}
Neural Networks are a type of connectionist machine learning system in which artificial neurons are connected to one another in an attempt to emulate biological cognitive functions.
Each neuron is a node that connects to others in a way that mimics the dendrite-synapse-axon connections, illustrated in Figure~\ref{fig:neuron}.
Each of these connections, similar to the myelin sheath in the brain, has a weight that determines the strength of any node on another.

Deep learning~\cite{goodfellow2016deep} is a particular form of machine learning in which artificial neurons are stacked on top of one another in two or more layers. 
Having a larger number of layers allows the network to learn increasingly complex representations at the cost of training speed and the need for more data. 
This has reasonably led us to ask what the applicability of neural networks and deep learning are to domains other than natural language processing~\cite{goldberg2016primer} and computer vision~\cite{lecun1998gradient}.

\begin{figure}
\includegraphics{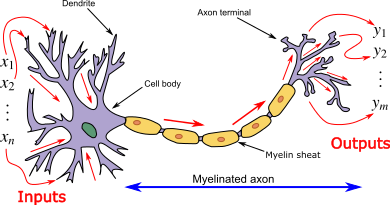}
\centering
\caption{Neuron and mylinated axon with signal flow by Egm4313.s12 (Prof. Loc Vu-Quoc) - Own work, CC BY-SA 4.0, \url{https://commons.wikimedia.org/w/index.php?curid=72816083}}
\label{fig:neuron}
\end{figure}

Neural networks are not new and are quite closely related to the work of Gauss and Legendre~\cite{calin2020deep} on polynomial regression.
This linear approximator takes the weights of the neuron $w_i$ and multiplies them by the input $x_i$ and compares them with a threshold (or bias, as we will refer to it), $b$. 
So for our linear approximator, $L$, we have: $L(x) = \sum_{i=1}^{n}w_i x_i - b$.
In order to determine whether or not this neuron ``fires'', we use a nonlinear activation function such as the Heaviside function, the sigmoid function, or the rectified linear unit (ReLU), which we denote $\sigma$. 
First, define the Heaviside function, which we denote $\delta(x)$~\footnote{The Heaviside function is typically denoted $H(x)$ but we reserve that notation for entropy, which we define in Chapter~\ref{chap:two}}:
\begin{equation}
	\delta(x) = 
	\begin{cases}
		0 \text{ if } x < 0\\
		1 \text{ if } x \geq 0\\
	\end{cases}
\end{equation}
We define the Rectified Linear Unit (ReLU) by $\sigma: \mathbb{R} \to \mathbb{R}$: 
\begin{equation}
	\sigma(z) = z \delta(z) = max\{0, z\}
\end{equation}
and so the output of our neuron is given by:
\begin{equation}\label{eqn: Neural network output}
	\hat{y} = \sigma\bigg(\sum_{i=0}^{n}w_i x_i\bigg)
\end{equation}
where $w_0 = b$ and $x_0 = - 1$ to account for the bias term.

The weights are initialized randomly when the network is instantiated and are updated during the training process.
In order to update the weights, a loss function must be specified.
This loss function will take the output of the neural network $\hat{y}$ and compare this prediction with the true value $y$ to assess the error.
Loss functions are too numerous to go through in detail here, so we refer interested readers to Goodfellow~\cite{goodfellow2016deep} for details.
For our purposes, we are interested in a classification task, and so we use the cross-entropy loss, defined as:
\begin{equation}
	J = -\sum_{c=1}^{M} y_{o,c} \ln(\hat{y}_{o,c})
\end{equation}
Where
$$y_{o,c} =
\begin{cases}
1 \quad\text{if class label $c$ is correct for observation $o$}\\
0 \quad\text{otherwise}	
\end{cases}
$$
and $\hat{y}_{o,c}$ is the predicted probability that observation $o$ belongs to class $c$.
Since our number of classes, $M$ equals 2, the cross-entropy loss simplifies to:
\begin{equation}
	J = -[y \ln(\hat{y}) + (1 - y) \ln(1 - \hat{y})]
\end{equation}

In order to update our weights, we must take the gradient of the loss function, $\nabla J$, and then update the weights of each layer by backpropogation: 
\begin{equation}
w_t = w_{t-1} - \alpha * \nabla J_{t-1}
\end{equation}
where $\alpha$ is the learning rate.
There are many optimization algorithms which can be used by neural networks, and this is an active area of research.
Throughout our text, we use stochastic gradient descent~\cite{hastie01statisticallearning} without momentum.

Much of the power of neural networks as compared to standard polynomial regression stems from their incredible ability to generalize to previously unseen data.
Cybenko~\cite{cybenko1989approximation} first proved that 2-layer neural networks using sigmoid activation functions can uniformly approximate any continuous function of n real variables with support in the unit hypercube.
This result has been extended several times to other activation functions and networks of bounded width and depth. 
Due to Lu \textit{et al.}~\cite{lu2017expressive}, we can state the following:

\begin{theorem}
Let $f: \mathbb{R}^n \to \mathbb{R}$ be a Lebesgue-measurable function satisfying
$$\int_{\mathbb{R}^n} \abs{f(x)} dx < \infty$$
then for any Lebesgue-integrable function $f$ and $\epsilon \in \mathbb{R}; \epsilon > 0$, there exists a fully-connected ReLU network $\mathcal{A}$ of width $d_m \leq 4 + n$ such that the function $F_{\mathcal{A}}$ represented by the neural network satisfies:
$$\int_{\mathbb{R}^n} \abs{f(x) - F_{\mathcal{A}}(x)}dx < \epsilon$$
\end{theorem}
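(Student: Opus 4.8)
The plan is to combine a routine measure-theoretic reduction with an explicit depth-based construction, trading the width of classical universal-approximation arguments for depth. First I would use the density of simple functions in $L^1(\mathbb{R}^n)$. Since $\int_{\mathbb{R}^n}\abs{f(x)}\,dx < \infty$, for any $\epsilon > 0$ there is a simple function
\[
 s = \sum_{k=1}^{N} c_k\,\mathbf{1}_{R_k},
\]
where the $R_k$ are disjoint axis-aligned boxes of bounded support, such that $\int_{\mathbb{R}^n}\abs{f(x) - s(x)}\,dx < \epsilon/2$; the bounded support is free, since the tails of an $L^1$ function carry arbitrarily little mass. It then suffices to exhibit a fully-connected ReLU network $\mathcal{A}$ of width $d_m \leq n+4$ whose realized function $F_{\mathcal{A}}$ satisfies $\int_{\mathbb{R}^n}\abs{s(x) - F_{\mathcal{A}}(x)}\,dx < \epsilon/2$.

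Second, I would construct the basic building block: a sub-network that approximates the indicator of a single box $R_k$. A one-dimensional indicator $\mathbf{1}_{[a,b]}$ is well approximated in $L^1$ by a continuous trapezoidal bump assembled from a few copies of $\sigma$, and the indicator of the full box is the conjunction of the $n$ coordinate conditions. Because $\min(u,v) = u - \sigma(u - v)$ is exactly ReLU-representable, I can realize this conjunction by chaining min-gadgets. The crucial budgeting claim is that the whole computation fits in width $n+4$: I reserve $n$ neurons to carry the input $x$ forward unchanged, one neuron to hold a running output accumulator, and the remaining (at most three) neurons as scratch space to evaluate the current box's soft indicator and fold $c_k$ times its value into the accumulator.

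Third, I would compose these blocks through depth. Chaining $N$ blocks in series, block $k$ reads the preserved copy of $x$, computes $c_k\cdot(\text{soft }\mathbf{1}_{R_k})(x)$, adds it to the accumulator, and passes $x$ untouched to block $k+1$; after the final block I read off the accumulator neuron as the output. The width never exceeds $n+4$; only the depth grows, both with $N$ and with the sharpness of each trapezoid.

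The hard part is threefold. First, the real-valued input must survive arbitrarily many ReLU layers without loss: since $\sigma$ annihilates negatives, I must maintain the invariant that the $n$ register neurons carry nonnegative shifted coordinates on which $\sigma$ acts as the identity, undoing the shift only inside the affine preprocessing of each min-gadget. Second, the width accounting itself is the delicate point of the theorem — I must show that $n+4$ neurons genuinely suffice for the conjunction-plus-accumulate gadget, whereas a width of $n$ or fewer would be too lossy to be universal. Third, I must control the aggregate $L^1$ error: each soft boundary contributes error proportional to its transition width, so I would make every transition $O(\epsilon/N)$ thin (with correspondingly large weights, which the statement permits, as it bounds only width and not weight magnitude) to keep the total boundary error below $\epsilon/2$, and combine this with the $\epsilon/2$ from the simple-function reduction via the triangle inequality.
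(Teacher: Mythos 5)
The paper does not prove this statement at all — it is quoted verbatim as a known result of Lu \emph{et al.} (2017) and used as background, so there is no in-paper proof to compare against. Your proposal is a correct reconstruction of the strategy actually used in that cited source (reduce to a simple function in $L^1$, then chain width-$(n+4)$ blocks in depth, with $n$ register neurons carrying the input, an accumulator neuron, and scratch neurons building soft box indicators from ReLU min-gadgets); the only detail you should also pin down is that the accumulator, like the input registers, must be stored with a large positive offset so that a running sum with negative coefficients $c_k$ survives the ReLU layers, with the offset subtracted in the final affine output.
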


Despite the strength of these results, representation learning and neural network interpretability are open questions.
At present, little is understood about the exact mechanism by which neural networks are able to learn, and what the meaning of the learned representation is.
Some theories exist and since many of them are not mutually exclusive, it stands to reason that several may be true.
In particular, we will consider the information bottleneck theory~\cite{tishby2015deep, fischer2020conditional} from a geometric point of view.

\section{Prior Work}
Our work leverages an expanded dataset from Watkins \textit{et al.}~\cite{watkins2018network} and one of our objectives, as in Watkins' work, is to build a model that sufficiently detects Android malware using the interarrival time of Internet Control Message Protocol (ICMP) ping packets.
In the literature, decision trees were used to classify traffic.
Other work on the dataset by Watkins' team more closely mirrors our own, and details are elaborated in Chapter~\ref{chap:three}.
In order to compare to Watkins' results to our own neural network results, we leverage a random forest from the Scikit-learn~\cite{scikit-learn} Python package to serve as a baseline.
Our work differs in that rather than just seeking to optimize our capability to detect malware, we use Watkins' dataset to evaluate the efficacy of different neural network architectures.

The use of Fourier transforms in neural networks has been of interest for some time, and there are several papers on the subject~\cite{osowski2002fourier, pratt2017fcnn, highlander2016very} which consider these applications.
The choice to explore Fourier transforms in convolutional neural networks is natural, as the dot product is much faster than a convolution operation that relies on a sliding kernel. 
We lean most heavily on the paper by Pratt \textit{et al.}~\cite{pratt2017fcnn} due to its recency and implementation details. 
Particularly, Pratt considers the impact of the convolution theorem within neural networks and uses the Fast Fourier Transform to quickly compute $\mathcal{F}(\kappa * u) = \mathcal{F}(\kappa) \odot \mathcal{F}(u)$, where $\mathcal{F}$ is the Fast Fourier Transform, $*$ denotes convolution, and $\odot$ denotes the Hadamard pointwise Product.
Ultimately, the paper shows that on the CIFAR-10 and MNIST datasets, the overall accuracy is lower than benchmark results - though the network trains and evaluates images much more quickly.
Interestingly, we found the opposite results, which we detail in Chapter~\ref{chap:three}.

Wavelet neural networks pioneered by Fujieda \textit{et al.}~\cite{fujieda2017wavelet} have shown promise for generalized convolution by abstracting them into downsampling and filtering in the spectral domain.
The results in the Fujieda paper were significant, as the network achieved better accuracy results than AlexNet on the target dataset while having approximately 1/4 the number of parameters.
In addition, the memory requirements and speed of the network were a significant improvement on the other architectures considered by Fujieda.
It is worth noting that implementation details from Fujieda are sparse, and so our implementation may differ from this reference implementation in some way, though the spirit and overall methods are the same.

Our work also considers and builds upon the Information Bottleneck theory of Neural Networks introduced by Tishby~\cite{tishby2015deep}.
The information bottleneck theory of deep learning suggests that the goal of supervised learning is to capture and efficiently represent the relevant information about the input data about the target data. 
In the process of creating a minimal sufficient statistic, a maximally compressed mapping of the input which minimizes mutual information is generated.
Tishby does this by demonstrating that the layered structure of the network creates a Markov chain of intermediate representations that forms the sufficient statistics.
The paper also suggests learning via information bottleneck, a technique we do not leverage.

Many of the issues with the information bottleneck theory are addressed in a controversial paper by Saxe~\cite{saxe2019information}.
Saxe argues compellingly that many of the issues with the saturation of nonlinearities such as tanh are not observed with ReLU.
Additionally, it argues - using empirical results - that networks which do not compress can still generalize. 
Saxe does not, however, argue that the fundamental conceit of the information bottleneck theory still holds and that an information theoretic approach to neural networks is still critical.

Fischer~\cite{fischer2020conditional} improves on Tishby's work by addressing Saxe's concerns and experimenting with both deterministic models as well as Variational Information Bottleneck~\cite{alemi2016deep} models. 
Fischer suggests that problems with robust generalization and lack of compression stem from models retaining too much information about the training data.
The Conditional Entropy Bottleneck model proposed by Fischer directly optimizes what he calls the Minimum Necessary Information criteria. 
Our work leans on the minimum necessary information criterion as a point of theory, and we detail it in Chapter~\ref{chap:two}.

\section{Content of this Thesis}
Neural networks largely remain a closed box and in practice, a lot of effort needs to be invested into feature engineering to achieve desired outcomes.
This is not necessarily true in the computer vision space where representations are learned from pixels, but is demonstrably true in domains such as information security, where terminology is not universally agreed upon and measurement can be difficult.
Additionally, feature engineering is crucial in systems where the relationship between measurements and events is not deterministic and easily confounded.

Wavelet transforms and the Fourier transform were considered for use as powerful tools from signal processing that have wide-ranging uses and implications well beyond our goal of malware classification.
Our initial expectation was that these tools would serve to enhance the ability of networks to learn by extracting features from the raw data, processing it into a form which would give the neural network a more robust feature set to learn from.
During the experiments, the transformations seemed not to alter the ability of the network to learn, and we looked to information theory for an explanation. 
In our consideration of the network as a manifold and the weights of the network as projections of our data as a coordinate system on that manifold which is optimized through gradient descent, information theory provided a framework on which to build understanding of our results. 
This work is presented in the following order, which front-loads this theory as a lens through which we can see our results.

\begin{itemize}
	\item In Chapter~\ref{chap:two}, we cover the necessary information theory to contextualize the results of this thesis. We cover common terminology, all of which is covered in greater depth in the canonical introduction to information theory by Cover and Thomas~\cite{coverthomas2006}. We move on to prove an important result about preservation of mutual information under homeomorphism. We introduce the information bottleneck and the concept of minimum necessary information,  which further contextualizes our results. We then discuss a geometric view of neural networks and how the optimal weights of the neural network can be viewed  as the orthonormal projection of the target onto the manifold of the neural network.
	\item In Chapter~\ref{chap:three}, we address our malware problem and dataset. We explain the four neural network architectures, and two baseline models we use in our experiments. We conduct transforms on our data and treat the raw, Fourier, and Wavelet-transformed data as well as a dataset of summary statistics in line with Watkins~\cite{watkins2018network}. We also run our malware dataset through Fourier and Wavelet neural networks inspired by prior work~\cite{pratt2017fcnn, fujieda2017wavelet} where the convolution theorem is exploited to learn representations. We consider the mutual information as before and address the accuracy results of the networks in question on our raw and summary datasets. We also capture the information plane representation of mutual information in the network similar to prior research~\cite{shwartz2017opening, saxe2019information}.
	\item In Chapter~\ref{chap:five}, we treat the MNIST dataset as our baseline for performance. This experiment was conducted to confirm our findings from our experiments on the malware dataset. We choose MNIST as our baseline dataset; due to the vast amount of literature on MNIST, it has been referred to as the "Drosophila of machine learning"~\cite{goodfellow2016deep}, making it suitable to contextualize our results. Our experimentation in this chapter is functionally the same as in Chapter~\ref{chap:three}, with minor modifications to the methodology where they would not apply to an image-based dataset, such as the elimination of the summary statistic dataset and the baseline detection models.
	\item Finally, Chapter~\ref{chap:conclusion} brings together the outcomes of our networks, baseline models, and information theoretic considerations of learning to explain our results. We consider avenues for further research and potential implications of our findings.
\end{itemize}

%
\chapter{Information Theory Preliminaries}
\label{chap:two}

\section{A Whirlwind Tour of Information Theory} 
Since our work makes use of information theory, it is helpful to cover the core terminology.
For the following proofs, all random variables are assumed to be discrete. 
This is both because binary computers have only finite precision, which means they are not ``truly'' continuous, and discrete information theory is a more mature science in that many foundational results are proven only in the discrete case.

First, we define entropy as the measure of uncertainty of a single random variable.
Let $X$ be a random variable with alphabet $\mathcal{X}$ and probability mass function $p(x) = \\Pr\{X = x\}, x \in \mathcal{X}$.
Then the entropy of $X$, represented $H(X)$ is defined as
\begin{equation}\label{eqn: Entropy}
H(X) = -\sum_{x \in \mathcal{X}} p(x) \log{p(x)}	
\end{equation}
Here our logarithm is to the base 2, as information is most commonly represented as bits.
We maintain this definition of the logarithm throughout. 

In the case where we are examining two random variables, for example, a dataset and its labels, we may want to consider the joint and conditional entropy of those random variables.
The joint density of a pair of discrete random variables $(X, Y)$ with joint distribution $p(X, Y)$ is:
\begin{equation}
H(X, Y) = - \sum_{x \in \mathcal{X}} \sum_{y \in \mathcal{Y}} p(x, y) \log{p(x, y)}	
\end{equation}

\noindent and the differential entropy $H(Y | X)$ as:
\begin{equation}\label{eqn: Differential Entropy}
H(Y|X) = -\sum_{x \in \mathcal{X}} \sum_{y \in \mathcal{Y}} p(x, y) \log{p(y|x)}	
\end{equation}

Therefore, we define mutual information, $I(X; Y)$ as the relative entropy between the joint distribution and the product distribution:
\begin{equation}\label{eqn: Mutual Info}
	I(X; Y) = \sum_{x, y} p(x, y) \log{\frac{p(x,y)}{p(x) p(y)}} = H(Y) - H(Y|X)
\end{equation}
Mutual information is an important quantity for us, since it is a measure of dependence between two random variables. 
Specifically, it provides a measure of information obtained about one random variable by the observation of another random variable.
Therefore, if we can observe $X$ and want to predict $Y$, we would like for the mutual information between X and Y to be high. 

We define the Kullback-Leibler divergence $D(p||q)$ between two probability mass functions $p(x)$ and $q(x)$ to be:
\begin{equation}\label{eqn: KL Divergence}
D(p||q) = \sum_{x \in \mathcal{X}} p(x) \log{\frac{p(x)}{q(x)}}	
\end{equation}

Lastly, we introduce the data processing inequality.
Assume that three random variables $X, Y,$ and $Z$  form a Markov chain denoted $X \to Y \to Z$.
Let $Z$ depend only on $Y$ and let $Z$ be conditionally independent of $X$. 
Then the data processing inequality shows that no local manipulation of the data can improve inferences drawn from that data.
By the chain rule, we can expand mutual information as follows:
\begin{align}
I(X; Y, Z) & = I(X; Z) + I(X; Y|Z) \\
& = I(X; Y) + I(X; Z|Y)
\end{align}
Since $X$ and $Z$ are conditionally independent given $Y$, we have $I(X; Z|Y) = 0$. 
Since $I(X; Y|Z) \geq 0$, we have:
\begin{equation} \label{eqn:DPI}
I(X; Y) \geq I(X; Z)	
\end{equation}
which is known as the data processing inequality. 

Detailed derivation of all above results are available in \cite{coverthomas2006}.

\section{Invariance of Mutual Information}\label{invariance}

We begin by proving the invariance of mutual information under homeomorphism, based on a similar proof~\cite{kraskov2004estimating}.

\begin{theorem}[Invariance of Mutual Information under Homeomorphism]\label{thm:MI invariance}
Take two random variables $X$ and $Y$ where $Y$ is the set of labels of $X$.
Let $X' = \psi(X)$, where $\psi$ is a smooth and uniquely invertible map (a homeomorphism).
Then since $X$ is a random variable, $X'$ is a random variable as long as $\psi$ is well-defined for the range of $X$.
Thus, $I(X'; Y) = I(X; Y)$.
\end{theorem}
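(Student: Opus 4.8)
The plan is to exploit the paper's standing assumption that all random variables are discrete: under that assumption a uniquely invertible map $\psi$ is nothing more than a bijective relabeling of the outcomes of $X$, moving probability mass around the sample space while changing none of the actual probabilities. Consequently the entire distributional structure on which mutual information depends is preserved, and the equality $I(X'; Y) = I(X; Y)$ should fall out with essentially no computation. I would lead with a proof built from the data processing inequality (Equation~\ref{eqn:DPI}), since it was just established and yields the result in two symmetric strokes.

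First I would observe that $X' = \psi(X)$ is a deterministic function of $X$, so $X'$ is conditionally independent of $Y$ given $X$; that is, $Y \to X \to X'$ is a Markov chain. Applying the data processing inequality to this chain gives $I(Y; X) \geq I(Y; X')$. Next I would use the hypothesis that $\psi$ is uniquely invertible to write $X = \psi^{-1}(X')$, so that $X$ is likewise a deterministic function of $X'$ and $Y \to X' \to X$ is also a Markov chain; the data processing inequality applied to this second chain gives the reverse inequality $I(Y; X') \geq I(Y; X)$. Combining the two yields $I(Y; X) = I(Y; X')$, which is $I(X; Y) = I(X'; Y)$ by the symmetry of mutual information.

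As an independent check, and a more elementary alternative, I would also carry out the direct substitution into the definition of mutual information (Equation~\ref{eqn: Mutual Info}). Because the events $\{X' = \psi(x)\}$ and $\{X = x\}$ coincide, we have $p(x') = p(\psi^{-1}(x'))$ and $p(x', y) = p(\psi^{-1}(x'), y)$ for every $x'$ in the alphabet $\psi(\mathcal{X})$. Substituting these into the sum defining $I(X'; Y)$ and then reindexing the summation by $x = \psi^{-1}(x')$---which is legitimate precisely because $\psi$ is a bijection between $\mathcal{X}$ and $\psi(\mathcal{X})$---turns the expression term for term into the sum defining $I(X; Y)$.

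The thing to get right is not the arithmetic but the role of the hypotheses. In the discrete setting the equality is nearly immediate, and the only property of $\psi$ that does any work is its bijectivity: the smoothness assumption in the statement is vacuous here, and it is unique invertibility alone that supplies the reverse direction in both arguments above. It is worth flagging the contrast with the continuous version proved in the source~\cite{kraskov2004estimating}, where $\psi$ genuinely must be smooth, because the change of variables introduces a Jacobian factor in each density and the real content is showing that this factor cancels inside the log-ratio defining the mutual information. By restricting to discrete random variables the paper sidesteps that obstacle entirely, so I would be careful to state explicitly that the result as proved here rests only on discreteness together with unique invertibility.
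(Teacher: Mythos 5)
Your proposal is correct, and both of your arguments go through, but neither is the route the paper takes. The paper's proof is the continuous change-of-variables argument inherited from Kraskov \emph{et al.}: it introduces the Jacobi determinant $J_X = \norm{\partial X/\partial X'}$, writes the transformed joint density as $f_{X',Y}(x',y) = J_X(x')f(x,y)$, and substitutes into the double integral defining $I(X';Y)$, where the Jacobian factors appearing in the joint and marginal densities cancel inside the logarithm. Your two proofs --- the double application of the data processing inequality to the chains $Y \to X \to X'$ and $Y \to X' \to X$, and the direct reindexing of the discrete sum by $x = \psi^{-1}(x')$ --- are genuinely different and, in one respect, better suited to the paper's own framework: the paper declares at the start of the chapter that all random variables are discrete, yet its proof immediately reaches for densities, integrals, and derivatives, which is a mild internal tension your discrete reindexing avoids entirely. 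Your DPI argument has the further advantage of generalizing to the continuous case with no Jacobian bookkeeping at all, and it isolates exactly which hypothesis does the work (invertibility, not smoothness) --- a point you correctly flag. What the paper's approach buys in exchange is an explicit display of how the densities transform under $\psi$, which it later leans on when discussing projections onto the parameter manifold; if you wanted your version to slot into that narrative, you would want to note that the reindexing identity $p(x',y) = p(\psi^{-1}(x'),y)$ is the discrete shadow of the paper's Jacobian relation.
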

\begin{proof}
Given the Jacobi determinant $J_X = ||\partial X/ \partial X'|| = ||\partial X / \partial \psi(X)||$, we observe that the joint distribution of $X'$ and $Y$ is given by: $f_{X', Y}(x', y) = J_X(x')f(x, y)$
\begin{align}
I(X'; Y) & = \int \int dx' dy f(x', y) \log \frac{f(x',y)}{f_{x'}(x')f_{y}(y')} \\
& = \int \int dx dy f(x, y) \log \frac{f(x, y)}{f_{x}(x)f_{y}(y)}\\
& = I(X; Y)
\end{align}
\end{proof}

\section{Minimum Necessary Information and Information Bottleneck}\label{MNI_IB}
Naftali Tishby and Noga Zaslavsky introduced the information bottleneck theory of neural networks~\cite{tishby2015deep} as a way of explaining the theoretical generalization bounds of neural networks.
In particular, Tishby and Zaslavsky show that any deep neural network can be quantified by the mutual information between the input, hidden layers, and the output variable by way of information per the data processing inequality, Equation~\ref{eqn:DPI}.
Neural networks satisfy the information bottleneck optimality equation:
\begin{equation}
\min_{p(z|x):Y \to X \to Z} |I(Z;X) - \beta I(Z; Y)| , \quad\beta > 0	
\end{equation}
Where $Y$ are the true labels, $X$ is the observed data about $Y$, and $Z$ is the learned representation. 
The information bottleneck learns the representation $Z$ subject to the above constraint, where $\beta$ controls the strength of the constraint.
The standard cross-entropy loss is recovered as $\beta \to \infty$. 
We do not concern ourselves with the existence of the compression phase addressed by Saxe~\cite{saxe2019information} but instead observe that the information bottleneck optimality equation holds irrespective of whether fitting and compression happen in sequence or simultaneously.
Additionally, the value of the information bottleneck to this work is in its implication that a neural network seeks to learn a representation, $Z$, which retains a maximal amount of information about $Y$ and a minmal amount of information about $X$.
Further work by Alemi \textit{et al.}~\cite{alemi2016deep} suggests refinements on the information bottleneck theory that we do not discuss in detail here. 

The Minimum Necessary Information as defined by Fischer~\cite{fischer2020conditional} consists of three components for a learned representation:
\begin{enumerate}
	\item \textbf{Information} We would like a representation $Z$ that captures useful information about a dataset $(X, Y)$.
	\item \textbf{Necessity} The value of information to accomplish a task. In this case, predicting $Y$ given $X$ using our representation $Z$. That is, $I(X; Y) \leq I(Y; Z)$
	\item \textbf{Minimality} Given all representations that can solve the task, we prefer the one that retains the smallest amount of mutual information. That is, $I(X; Y) \geq I(X; Z)$.
\end{enumerate}
As mentioned in our discussion of Equation~\ref{eqn: Mutual Info}, the higher the mutual information between this representation $Z$ and our desired prediction $Y$, the better our predictions will be. 

Using Fischer's definitions of necessity and minimality, we see that there is a point called the ``MNI Point'':
\begin{equation}
	I(X; Y) = I(X; Z) = I(Y; Z)
\end{equation}
This equation may not be satisfiable, since for any representation $Z$ given a dataset $(X, Y)$, there is a maximum value we are subject to:
\begin{equation}\label{eqn:MNI}
	1 \geq D(X||Z) = \sup_{Z \leftarrow X \rightarrow Y}\frac{I(Y; Z)}{I(X; Z)}
\end{equation}
Where $D(X||Z)$ is the KL divergence given in Equation~\ref{eqn: KL Divergence} and we achieve equality if and only if the Markov chain $X \to Y$ is deterministic.


\section{Information Geometry of Neural Networks}
A neural network, as mentioned briefly in Section \ref{intro:nn}, is a form of connectionist machine learning that is a universal approximator under minor assumptions about the activation function~\cite{goodfellow2016deep}.
In particular, a neural network connects many artificial neurons each of which receive input $x$ and emit an output that is a prediction of $y$. 
From Equation~\ref{eqn: Neural network output}, the forward pass of a single neuron gives us:
\begin{equation}
\hat{y} = \sigma\bigg(\sum_{i=1}^{n}w_i x_i + \beta_{i} \bigg) = \sigma(w x + \beta)	
\end{equation}
Where $\sigma$ is an activation function meeting the aforementioned assumptions, and $\beta$ is a bias vector.

Let $\mathcal{S}$ be the manifold of neural network outputs $\mathcal{S} = \{\sigma(w x + \beta) : w \in \mathbb{R}^n, \beta \in \mathbb{R}\}$ parametrized by $w$ and $\beta$. 
We picture the manifold $\mathcal{S}$ as an (n+1)-dimensional smooth surface in the infinite-dimensional space of functions on $\mathbb{R}^n$. 
Assume that our data is generated by some function $g$ such that $y = g(x)$. 
Then if $g \in \mathcal{S}$ there exist $w^* \in \mathbb{R}^n$, $\beta^* \in \mathbb{R}$ such that we have an exact representation of $g$.
In general, most target functions are not in $\mathcal{S}$ and so we must train the values
\begin{equation}
	(w^*, \beta^*) = \arg \min_{w, b} dist(g, \mathcal{S})
\end{equation}
which correspond to the coordinates of the orthogonal projection of $g$ onto the surface $\mathcal{S}$.
Thus, our optimal parameter $\xi^* = (w^*, \beta^*)$, if it exists, is given by:
\begin{align} \label{eqn:optimal_xi}
	\xi^* &= \arg \min_{w, \beta} dist(p(x, z), p(x, y; w, \beta)) \\ 
	&= \arg \min_{\xi} D(p(x, z) || p(x, y; \xi))
\end{align}
\noindent where D is the KL divergence specified in Equation~\ref{eqn: KL Divergence}.
The relationship between the joint probability distribution and mutual information is specified in Equation~\ref{eqn: Mutual Info}. 
Since we optimize $\xi$ with respect to $D(\hat{y}||{y})$, as we approach the MNI point, the KL divergence approaches zero.

Returning to our derivation in Theorem~\ref{thm:MI invariance}, we see that the optimal parameter $\xi^*$ and the MNI point given by Equation~\ref{eqn:MNI} can be achieved for $X' = \phi(X)$, where $\phi$ is a homeomorphism, since any projection onto $\mathcal{S}$ will still be on the manifold, translated by the map $\phi$.
Thus, we conjecture that the ability of a network to learn a representation that is predictive of $y$ is invariant to homeomorphism on the input data manifold.
Returning to Equation~\ref{eqn:MNI}, when $Z$ is replaced by $\hat{y}$, we find that as we approach the global minimum of the loss surface, we are minimizing $D(\hat{y}||y)$, which allows us to show:
\begin{align} \label{eqn:2.19}
\min D(\hat{y}||y) & = \min \sup_{\hat{Y} \leftarrow X \rightarrow Y}\frac{I(Y; \hat{Y})}{I(X; \hat{Y})} \\
& = \min \sup_{f(X, \xi) \leftarrow X \rightarrow Y} \frac{I(Y; f(X, \xi))}{I(X; f(X, \xi))} \\
& = \min \sup_{f_X(\xi) \leftarrow X \rightarrow Y} \frac{I(Y; f_X(\xi))}{I(X; f_X(\xi))}
\end{align}
where $f_X(\xi)$ is the neural network with input $X$ and given parameters $\xi$.
So our learning process is minimizing the mutual information between $Y$ and $\xi$, while maximizing the mutual information between $X$ and $\xi$.
This inequality also holds for $\xi'$, the optimal set of parameters for the input $X'$.
This theory is discussed in the context of our experiments in Chapter \ref{chap:conclusion}.
%
\chapter{Experiment 1 - Watkins Malware Dataset}
\label{chap:three}
Neural networks have demonstrated some success in the security domain~\cite{raff2018malware} and so we have applied them to the Watkins~\cite{watkins2018network} dataset.
This dataset consists of interarrival times for packets sent to Android devices, some of which were running malware.
Detecting malware via network traffic is an important problem, and this interpretation of the problem is crucial for addressing the situation where a network owner cannot install an antivirus agent on a device that may be infected with malware.
This is a common situation when personal devices are introduced to a corporate network; being able to detect malicious software on a device without having an agent on the device provides a tremendous benefit to network defenders.

\section{Methodology}
This experiment proceeds in three parts.
The first considers our standard models: fully-connected neural network, convolutional neural network, support vector machine, and random forest.
These models process our malware datasets in alignment with the methods elaborated in Section~\ref{data}.
In experiment 1.a, the neural networks are trained for a maximum of 30 epochs and we use the early stopping technique to prevent overfitting.
Early stopping will end training early when some condition is met - in our case, we stop early if the network's loss has not decreased by 0.001 or more for two consecutive training epochs.

In experiment 1.b, we consider only the raw data across the standard models as well as the Fourier neural network and wavelet neural network.
The Fourier neural network and wavelet neural network differ from a conventional convolutional neural network by performing an in-layer transformation before the activation function is applied, exploiting the convolution theorem.
As in our previous experiment, in order to ensure that deviations in the dataset did not induce significant variation in accuracy, 100 trials were run, and the accuracy and mean step time for all trials was averaged.

In experiment 1.c, the network is trained for 1000 epochs without early stopping, and at each training epoch, the mutual information between the labels and the network, $I(Y; M)$ and the mutual information between the data and the network, $I(X; M)$ is computed.
This differs from the previous experiment as we do not concern ourselves with accuracy but instead wish to see and plot the change in mutual information during training.
By running for 1000 epochs, we significantly overfit the training set, making this a poor approach for optimizing accuracy.
The details of this plot are described below in Section~\ref{infoplane}

\subsection{Mutual Information Computation}\label{MI computation}
In each experiment, the following data are collected for each epoch:
\begin{enumerate}
	\item The L2 norm of the weights
	\item The mean of the gradients
	\item The standard deviation of the gradients
	\item The post-activation output of each layer for the test set. 
\end{enumerate}
These data are then stored in a file.
After training, the data is loaded from the files.
The entropy of the activity is computed by considering the KL-based upper bound on the entropy using techniques from Section 4 of Kolchinsky and Tracey~\cite{kolchinsky2017estimating} to yield the entropy of the layer $H(M)$.
This estimate is:
\begin{equation}
	H(M) = -\sum_{i} p_i \ln \sum_{j} p_j \exp(-D(m_i || m_j))
\end{equation}
where $p$ is either the probability density of the dataset, $X$, or the probability of the label $Y$, and $m$ is the probability density estimate of our network layer, $M$.
For the entropy with respect to the labels, individual label probabilities are computed and used with the entropy of the activity to compute the conditional entropy of the activity given the label probabilities, giving us $H(M|Y)$.
This is used in conjunction with our computation of $H(M)$ so that we can compute the mutual information $I(Y; M) = H(M) - H(M|Y)$.
 
These two mutual information values are then used to display information plane data as plotted in Figure~\ref{fig:malware infoplane fc}, Figure~\ref{fig:mnist fc infoplane} and others.
These calculations are identical to the methods used in Saxe~\cite{saxe2019information}.

\section{Data}\label{data}
We leveraged four different datasets: Raw, Fourier-transformed, Wavelet-transformed, and a dataset consisting of summary statistics.
The summary statistics of the first three are captured in Table~\ref{Tab:summary} - we did not compute summary statistics for the dataset of summary statistics.  
Our data consisted of 98 legitimate applications and 120 pieces of malware, which were collected by Yu and Li~\cite{yu2018network}.
This gives us a dataset that is approximately 55\% malware and 45\% benignware.
While this distribution is not reflective of real environments where malware is significantly rarer than benign applications, we do not adjust for this disparity since our tolerance for alerting on benign applications is much higher than our tolerance for not detecting malicious applications. 
For each application, five trials were conducted where the interarrival time was collected for each of 100 ICMP ping packets, yielding a total dataset of 1090 trials.
Further details of the data collection can be found in~\cite{yu2018network, watkins2018network}.
One interesting effect of performing the transforms on the dataset is that while the continuous wavelet transform reduces our variance significantly and slightly normalizes the dataset, the Fourier transform has the opposite effect, introducing tremendous amounts of noise into the dataset.

\subsection{Raw data}
This is the data as described above, captured by Yu and Li in accordance with Watkins~\cite{watkins2018network}.
In this dataset, only the raw measurements are used in a 100-dimensional row vector, with a label of 0 for benign and 1 for malicious.

\subsection{Fourier data}
The Fourier data is a copy of the raw data under the Fourier transform.
In particular, since our raw data is given by a single 100-dimensional row vector, it is a direct mapping of that row vector under the Fast Fourier Transform as provided by the numpy library.

\subsection{Wavelet data}
The wavelet dataset is a copy of the raw data under a continuous wavelet transform. 
The Morlet wavelet is used for the transform for several reasons:
First, it is a wavelet that allows us to maintain the dimensionality of our data, making it easier to compare in performance and to re-use neural network architectures.
Secondly, the Morlet wavelet is closely related to human perception~\cite{mallat1999wavelet, daugman1985uncertainty}, providing a small connection to the human brain conception of neural networks.
Lastly, the Morlet wavelet is uniquely invertible, which is not the case for all potential mother wavelets. 

\subsection{Summary data}
The summary data leverages the seven features used by Yu and Li: arithmetic mean, standard deviation, variance, maximum, minimum, geometric mean, and harmonic mean. 
These features were fed to the classifier based on the sample they were captured from as a 7-dimensional row vector.

\renewcommand{\thefootnote}{*} 
\begin{table}[h]
\centering
\begin{tabular}{l|llll}
\textbf{Dataset Name} & \textbf{Mean} & \textbf{Median} & \textbf{Mean Var.} & \textbf{Median Var.} \\\cline{1-5}
Raw         & 27.43    & 10.07    & 8329.96    & 7663.89 \\
Fourier       & 58.60\footnotemark    & -1.45    & 12229005.34    & 7418186.12 \\
Wavelet        & 1.30    & -.072    & 1887.5    & 1507.16 \\             
\end{tabular}
\caption{Dataset Summary Statistics}
\label{Tab:summary}
\end{table}
\footnotetext{There is an extremely small, but non-zero imaginary part, on the order of $10^{-19}i$}

\renewcommand{\thefootnote}{1}
The small but non-zero imaginary part in the Fourier data required implementation of methods from Trabelsi \textit{et al.}~\cite{trabelsi2017deep} to achieve acceptable results.

\section{Models}\label{model_descriptions}
In our experiments, we leveraged the following models:
\begin{itemize}
\item Fully Connected Neural Network
\item Convolutional Neural Network
\item Fourier Neural Network
\item Wavelet Neural Network
\item Random Forest
\item Support Vector Classifier	
\end{itemize}

The summary statistic dataset was not used with the convolutional neural network, nor was it used with the Fourier or wavelet neural networks because there is no spatial relationship between the data and so convolution offers no benefit.
Additionally, only the raw data was processed by the Fourier and Wavelet neural networks.
Though these networks are capable of processing the transformed data, there is no obvious benefit to transforming already-transformed data in-network.

All code\footnote{Code is available at the following url: \url{https://github.com/erickgalinkin/jhu_masters}} was written in Python, using the Tensorflow 2, PyTorch, and Scikit-learn libraries.
Only the baseline models - the random forest and support vector machine - described in \ref{other models} used the Scikit-learn library, and only the Wavelet Convolutional network described in \ref{wavelet cnn} used PyTorch.
The remaining models all used the Tensorflow framework.
In the case of our non-standard neural networks, we consider the work of Pratt~\cite{pratt2017fcnn} and Fujieda~\cite{fujieda2017wavelet}.
Both the Fourier and Wavelet neural networks take advantage of the convolution theorem - that is, given two functions $f$ and $g$,
\begin{align*}
(f * g)(t) & = \int_{-\infty}^{\infty} f(\tau)g(t-\tau)dt \\
& = \int_{\mathbb{R}^n}f(x) e^{-2\pi i x \nu} dx	 \cdot \int_{\mathbb{R}^n}g(x) e^{-2\pi i x \nu} dx\\	
& = \mathcal{F}\{f\}(\nu) \cdot \mathcal{F}\{g\}(\nu)
\end{align*}

and in the inverse, we get:
$$f \cdot g = \mathcal{F}^{-1}\{\mathcal{F}\{f\} * \mathcal{F}\{g\}\}$$

This allows us to avoid the high computational cost of performing a convolution via the sliding-tile method and instead potentially take advantage of the convolution theorem to perform convolution at the speed of the dot product.
We further elaborate on the architecture below.

For our neural networks, we use the Tensorflow standard stochastic gradient descent optimizer, with a learning rate of 0.001.
Results with other optimizers have been promising, and Adam~\cite{kingma2014adam} has been the optimizer of choice for many deep learning applications in the past few years, though we do not leverage it here. 
Hardware specifications on which these experiments ran is in Appendix~\ref{append:one}.

\subsection{Fully-Connected Neural Network}
The fully-connected neural network architecture is a basic multi-layer perceptron that accepts a 100-dimensional row vector.
This vector is then fed to three densely connected hidden layers, each with 256 ReLU-activated neurons.
The fourth and final output neuron is a single sigmoid-activated layer, which provides a probability of maliciousness.

\subsection{Standard Convolutional Neural Network}
Our convolutional neural network is a sequential model that accepts a 100-dimensional row vector as input.
This input is processed by two convolutional layers, both with 256 neurons.
The first has a kernel size of 5 and a stride size of 1, and the second has a kernel size of 3 with a stride of size 1.
The output of the second convolutional layer is processed by two densely connected layers of 128 neurons each. 
The final layer consists of a sigmoid-activated output layer, the same as the fully-connected neural network.

\subsection{Fourier Convolutional Neural Network}
Our Fourier ``Convolutional'' neural network is identical architecturally to our standard convolutional neural network, only with the convolutional layers replaced by Fourier layers.
Here, we put the word convolutional in quotes due to the fact that no actual convolution is performed.
To be more intellectually honest, we should refer to this network instead as a ``Fourier Transform Cross Product Network'', though this may confuse readers unfamiliar with the relationship.
In the interest of broad understanding, the term convolutional neural network is used when it helps clarify meaning even in spite of being a slight misnomer.

Specifically, the Fourier Convolutional Neural Network leverages a custom Fourier layer that moves the data into Fourier space via the Fast Fourier Transform and then multiplies the transpose of the weight matrix with the input to the matrix.
Specifically, given an input $X^{(n)}$ where the superscript is not an exponent, but instead indicates the layer of the input, the Fourier layer $\ell$ acts on $X$ to give an input to our next layer:
\begin{equation}
X^{(n+1)} = \ell^{(n)}(X^{(n)}) = \sigma(\mathcal{F}^{-1}(\mathcal{F}(X^{(n)})\cdot \mathbf{W}^{(n)\top}))
\end{equation}
\noindent Where $\mathcal{F}$ is the Fast Fourier Transform, $\sigma$ is the activation function - ReLU in this case - and $\mathbf{W}$ is the weight matrix for layer n.

\subsection{Wavelet Convolutional Neural Network} \label{wavelet cnn}
The Wavelet Convolutional Neural Network implements similar functionality to our Fourier Neural Network, using the Discrete Wavelet Transform in lieu of the Fourier transform.
Due to the fact that there is a time component and a frequency component, the wavelet neural network has a different in-layer dimensionality than our other models but is otherwise identical.

In our Wavelet Convolutional Neural Network, we take a 100-dimensional row vector as input.
This input is then sent to the ``wavelet layer'' where it undergoes a Daubechies discrete wavelet transform.
There are a very large number of wavelets which can be used in the discrete wavelet transform~\cite{mallat1999wavelet}, but the Daubechies wavelet is easy to put into practice and has a unique inverse everywhere, so we use it here.
The output is cast to a tensor that is multiplied against the transpose of the weight tensor.
This output then undergoes an inverse discrete wavelet transform with respect to the same mother wavelet.

\subsection{Baseline Models} \label{other models}
Two baseline models were considered on these datasets.
The first is the random forest model provided in the Scikit-learn library with no hyperparameter tuning.
Decision tree models are generally good at classification tasks~\cite{hastie01statisticallearning} but are weak classifiers that are sensitive to variance.
Random forests are the result of averaging a large collection of de-correlated trees and provide a good benchmark as a na\"ive model - in the respect that it is untuned - for classification.
Random forests are also performant in the respect that they train and evaluate examples quickly, relative to neural networks.
This makes them common for use in industry.

The other benchmark model is a Support Vector Classifier, again provided by the Scikit-learn library.
The rationale for using a Support Vector Machine is that we wanted to see if some hyperplane could be learned that would separate the data.
This model was again, na\"ive in the respect that it was merely the ``out of the box'' model, and so the classifier was built on top of the radial basis function kernel.
Details of the Support Vector Classifier can be found in James~\cite{james14introduction} or the Scikit-learn documentation~\cite{scikit-learn}.

\section{Results} \label{malware results}
We split the results here into three subsections for clarity, first presenting and discussing the malware data transformations with respect to the algorithms they were tested on. 
We then turn to the ways that all six architectures performed on the raw data.
Finally, we discuss the information plane of our neural networks.

\subsection{Malware Dataset Transformation}
Our results for the transformed datasets, contained in Table~\ref{Tab:malware_test} show our test accuracy and the mean time per batch for each neural network.
The time per batch is not available for the baseline models.

\begin{table}[h]
\centering	
\begin{tabular}{l|ll}
\textbf{Data and Architecture Combination} & \textbf{Test Accuracy} & \textbf{Mean Step Time} ($\mu$s) \\\cline{1-3}
Raw, Fully-Connected NN            & 63.40\%         & 13\\
Summary, Fully-Connected NN        & 55.14\%         & 13\\
Fourier, Fully-Connected NN        & 59.88\%         & 12\\
Wavelet, Fully-Connected NN        & 61.95\%         & 12\\
\hline
Raw, Convolutional NN              & 72.89\%         & 54\\
Fourier, Convolutional NN          & 70.81\%         & 56\\
Wavelet, Convolutional NN          & 70.77\%         & 52\\
\hline
Raw, Random Forest                 & 80.28\%         & N/A\\ 
Summary, Random Forest             & 76.91\%         & N/A\\
Fourier, Random Forest             & 79.63\%         & N/A\\
Wavelet, Random Forest             & 79.80\%         & N/A\\
\hline
Raw, Support Vector Classifier     & 65.77\%         & N/A\\    
Summary, Support Vector Classifier & 55.28\%         & N/A\\  
Fourier, Support Vector Classifier & 55.28\%         & N/A\\  
Wavelet, Support Vector Classifier & 55.28\%         & N/A           
\end{tabular}
\caption{Classifier accuracy on transformed datasets}
\label{Tab:malware_test}
\end{table}

In terms of accuracy, we find that the random forest on the raw data performs best, followed closely by the random forest on the wavelet-transformed data, and third the random forest trained on the Fourier-transformed data. 
On all datasets, the random forest classifier outperforms all other classifiers on that same dataset.
Notably, when we compare accuracy by model, we find that for the fully-connected neural network, our maximum average accuracy is 63.40\%, while our minimum average accuracy is given by the summary statistics.
Excluding the summary statistic data, the difference between the highest average accuracy and lowest average accuracy for fully-connected neural networks is 3.52\%, a very small margin. 
Comparatively, for the convolutional neural network, our delta is 2.12\%, again - quite small. 
Similarly, the random forest performs near the 80\% mark universally, irrespective of representation, and performs worst on the summary statistic dataset.

\subsection{In-network Data Transformation}
For our in-network data transformations, we consider only the raw dataset. 
The test accuracy and mean time per batch for both the Fourier and Wavelet neural network are contained in Table~\ref{Tab:test_arch} along with the results for the other four architectures on which the raw data was tested.

\begin{table}[h]
\begin{tabular}{l|ll}
\textbf{Architecture}  & \textbf{Test Accuracy} & \textbf{Mean Step Time} ($\mu$s) \\\cline{1-3}
Fully-Connected NN            & 63.40\%         & 13\\
Convolutional NN              & 72.89\%         & 54\\  
Fourier NN                    & 63.27\%         & 143\\
Wavelet NN                    & 74.85\%         & 228\\
Random Forest                 & 80.28\%         & N/A\\ 
Support Vector Classifier     & 65.77\%         & N/A       
\end{tabular}
\caption{All classifier accuracy on raw dataset only}
\label{Tab:test_arch}	
\centering
\end{table}

It is worth noting that one of the primary motivations for replacing the sliding-tile convolution method with a Fourier or Wavelet method is the performance gains identified by others~\cite{pratt2017fcnn}.
However, as we show, the Fourier and Wavelet networks are significantly slower than their untransformed counterparts on this dataset.
We conclude that the computational overhead of performing a transform and its corresponding inverse transform outweighs the speed-up gained by eliminating the sliding-tile convolution on smaller datasets, and the method as demonstrated in Pratt~\cite{pratt2017fcnn} should be reserved for relatively large images, where convolution is already slow.
In our case, we see a 2.65x increase in step time between a standard convolution and the Fourier method. 
Unfortunately, our activation functions do not behave nicely in the Fourier or Wavelet domain, as these functions operate linearly with respect to the space and so an inverse transform must be applied. 
The question of using a novel convolution operator and conducting the activation in that space has been addressed by Chakraborty~\cite{chakraborty2019surreal} but goes well beyond the question of simply adapting an activation function to the Fourier or Wavelet space.
The search for a spectral activation function remains an open question.

\subsection{Malware Dataset Information Plane Analysis}\label{infoplane}
Figure~\ref{fig:infoplane example} displays a zoomed-in view of the information plane for our malware dataset and neural network.
On the x axis is the mutual information $I(X;M)$, computed as described in Section~\ref{MI computation}.
On the y axis is the mutual information $I(Y;M)$. 
Optimally, we want to see high values on the y axis and lower values on the x axis for each layer - this would suggest that the learned representation in the neural network, $M$, requires relatively little data about $X$ to reliably predict $Y$.
In this figure, each layer is plotted independently. 

\begin{figure}[h!]
\begin{center}
\includegraphics[width=\textwidth]{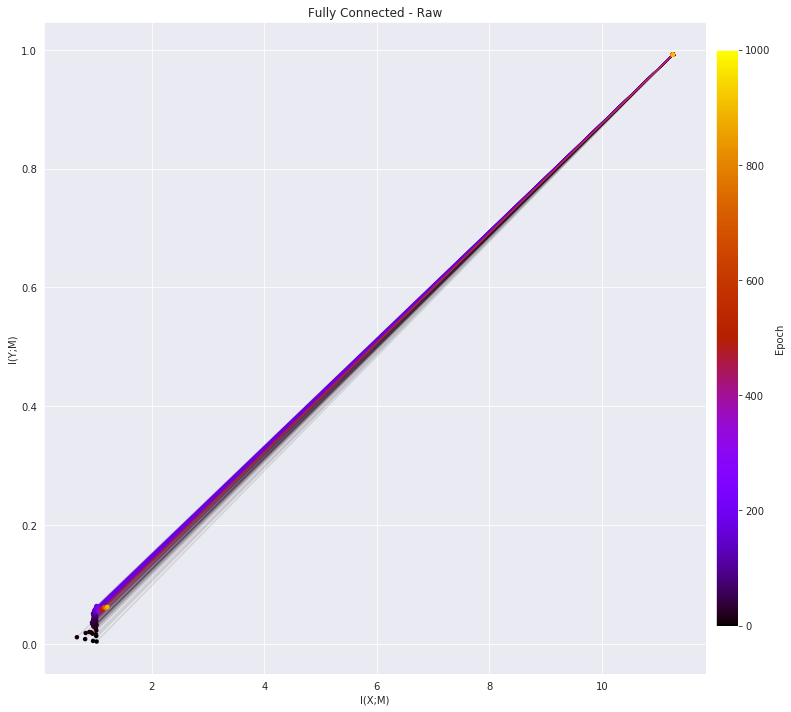}
\caption{Large plot for Fully-Connected Neural Network Information Plane on Raw Data. Produced using the upper bound and binning methodology from Saxe~\cite{saxe2019information} over 1000 epochs.}
\label{fig:infoplane example}
\centering
\end{center}
\end{figure}

The cluster of data points on the lower-left hand side represent the output layer, which gains slightly better predictive ability about the data throughout the 1000 epochs. 
The shift toward the right in later epochs is suggestive of overfitting the dataset, and $M$ containing more information about $X$.
Meanwhile, the shift upward, particularly early on, indicates the network improving the amount of mutual information between $M$ and $Y$.
We define the information plane as in Tishby~\cite{tishby2015deep}: the plane of the mutual information values that each layer preserves on the input and output.
In the upper right of the plot, we see what appears to be a single point - this is all three hidden layers of the neural network, which do not see any change in mutual information.
We verified during training that the weights were changing as expected in all hidden layers, and the network loss went down throughout training; only the mutual information did not change.
The low level of mutual information may be due to the weak correlation relationship between $X$ and $Y$, which has a bivariate correlation of 0.2629 for the raw data.

We can see in Figure~\ref{fig:malware infoplane fc}, subplots A, B, and C, that the amount of information changes very little. There is a high level of mutual information about $Y$ and $X$ captured in the hidden layers, while the output layer has almost no information about $Y$ and only learns less than 2 bits of information about $X$.
Given how similar the accuracies for the fully-connected neural network were - as can be seen in Table~\ref{Tab:malware_test} and how similar subplots A, B, and C are in Figure~\ref{fig:malware infoplane fc}, it's clear that the learned representations capture the same amount of information about the target labels.
With respect to our summary dataset information plane in subplot D, we note that the graph looks more like a scatter plot than a line chart, seemingly because of the nature of the transformation - that is, the entire representation of the data is changed. 

\begin{figure}[h]
\begin{center}
\includegraphics[width=\textwidth]{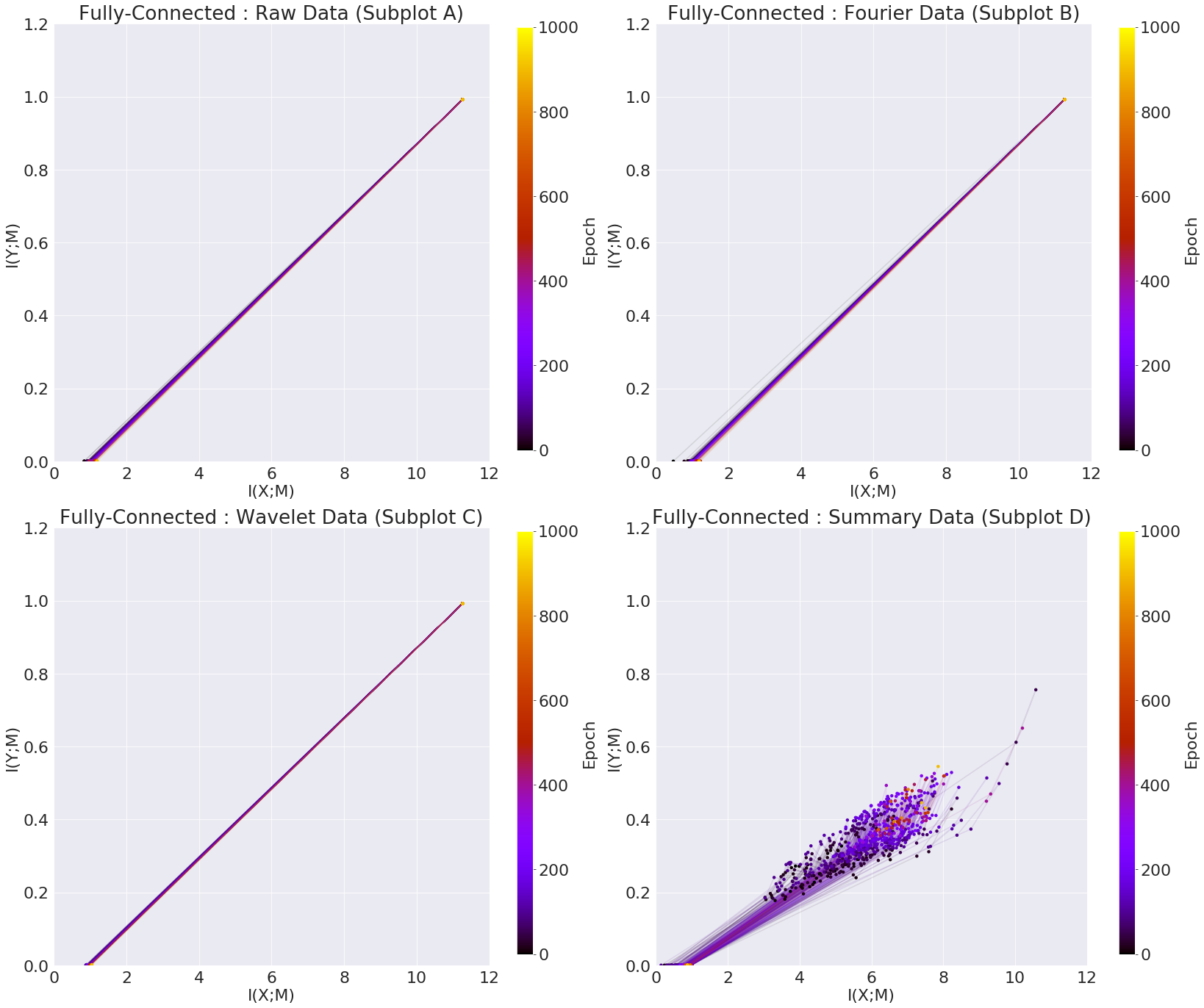}
\caption{Fully-Connected Neural Network Information Plane for four malware data sets}
\label{fig:malware infoplane fc}
\centering
\end{center}
\end{figure}

As we note in our discussion of minimum necessary information in Equation~\ref{eqn:MNI}, we achieve optimality only when $X$ uniquely determines $Y$, which does not appear to be the case for our dataset. 
It is worth noting that all of the information planes in Figure~\ref{fig:malware infoplane fc} aside from the summary data in subplot D do not change their mutual information for the hidden layers and converge to the same mutual information for the output layer - the only layer which sees a change in mutual information.
It was experimentally verified that although the weights and entropies of each individual layer did change throughout training, the mutual information for the hidden layers remained stationary across 1000 epochs on the non-summary datasets.

\begin{figure}[h!]
\begin{center}
\includegraphics[width=\textwidth]{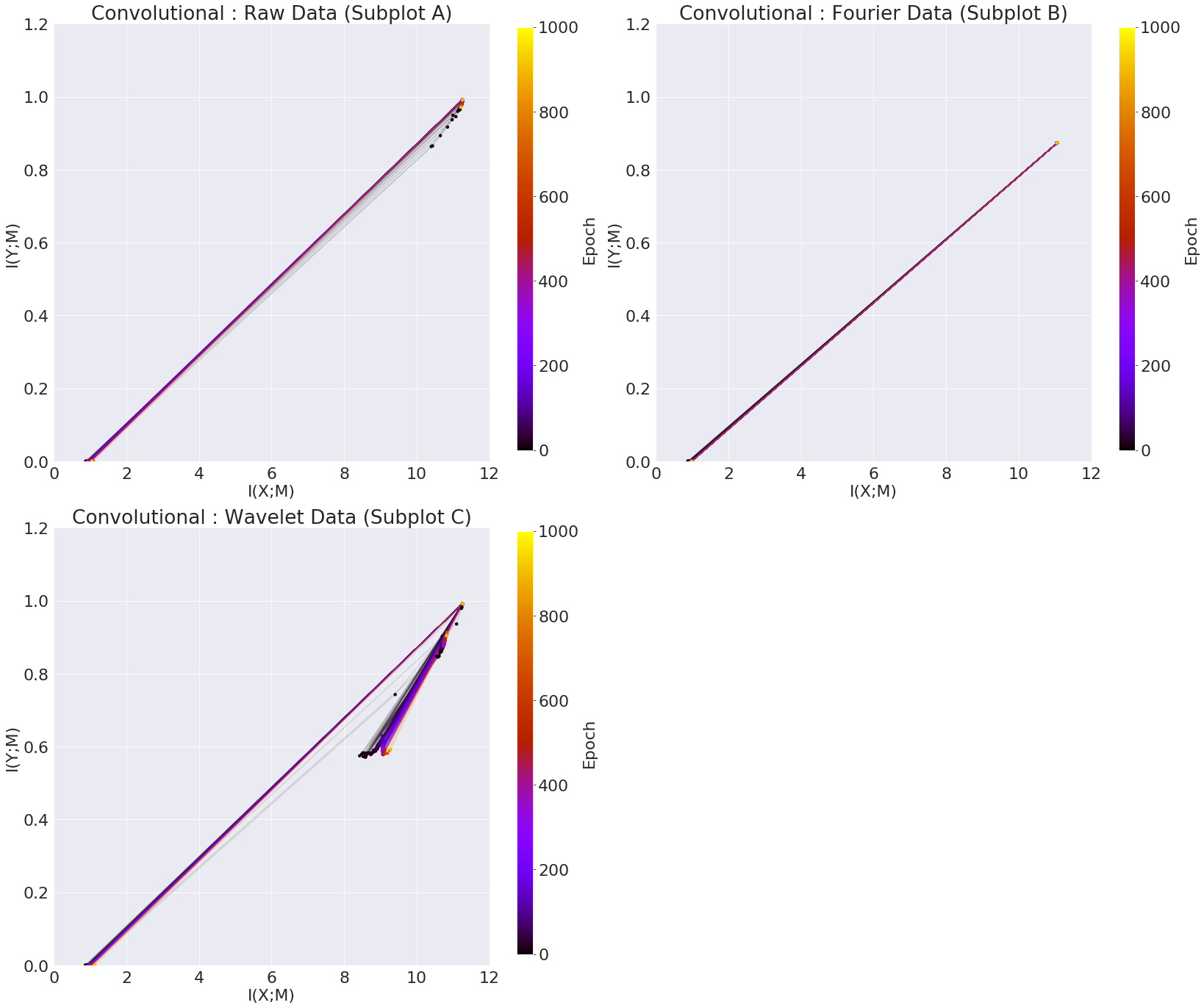}
\caption{Information Plane for Convolutional neural network for three data sets}
\label{fig:malware infoplane conv}
\centering
\end{center}
\end{figure}

\begin{figure}[h!]
\begin{center}
\includegraphics[width=\textwidth]{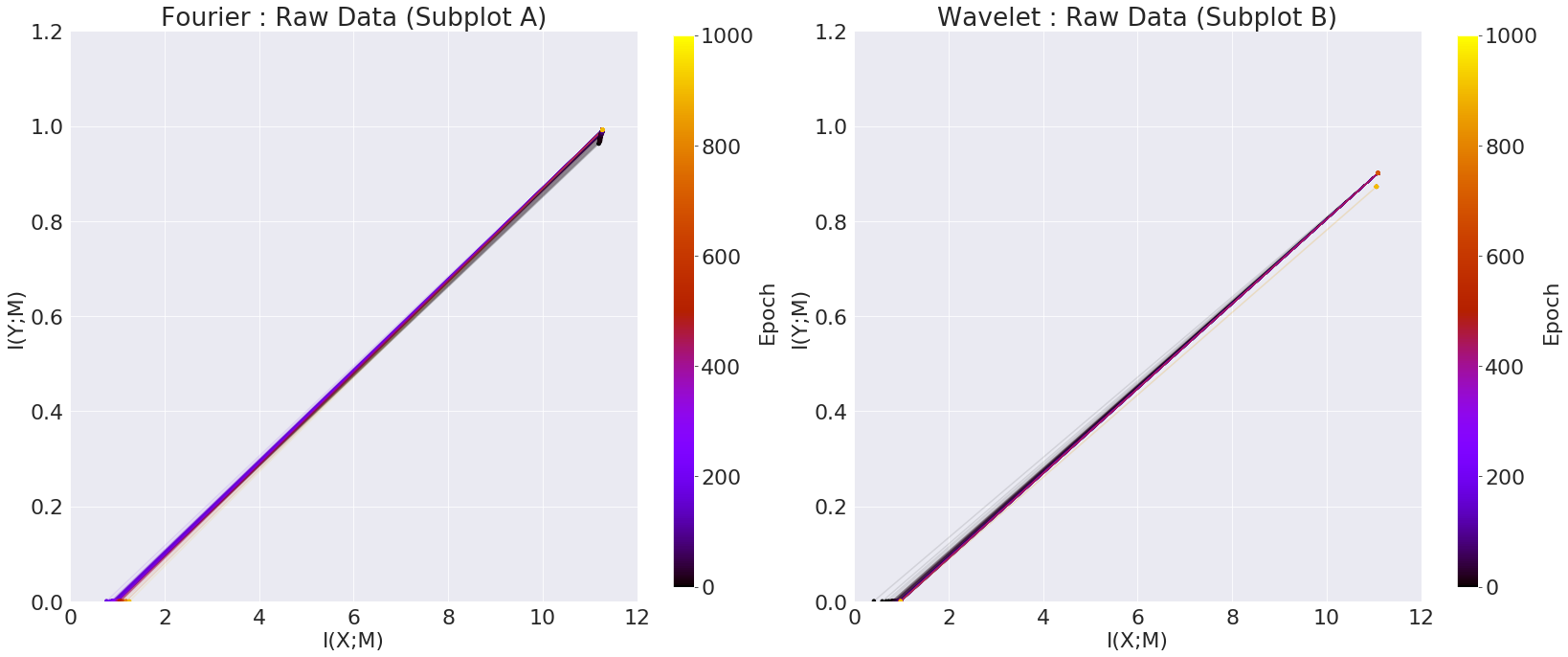}
\caption{Information Plane for Fourier and Wavelet neural networks for raw data}
\label{fig:malware infoplane fourier-wavelet}
\centering
\end{center}
\end{figure}

The similarity of the plots in Figure~\ref{fig:malware infoplane fc}, Figure~\ref{fig:malware infoplane conv}, and Figure~\ref{fig:malware infoplane fourier-wavelet} is not a coincidence, and the captured mutual information about the labels in the output layer is within a fraction of a bit for all of our networks. 
An unexpected deviation can be observed in Figure~\ref{fig:malware infoplane conv} subplot C, the information plane for the convolutional neural network trained on the wavelet data. 
This effect happens in the convolutional layers, and only on the wavelet-transformed data.
The cause of this change is unknown, and is reserved for future work.
Despite this difference, the dense hidden layers and the output layer converge to the same points in the information plane as the other networks and datasets.

In Figure~\ref{fig:malware infoplane fourier-wavelet}, we observe that the process of putting our data through the Fourier transform or Wavelet transform and the corresponding inverse transform, seems to preserve both mutual information and our accuracy.
Both subplots A and B mimic the information plane of the fully-connected neural networks and have nearly the same information plane graph as the convolutional neural network as seen in Figure~\ref{fig:malware infoplane conv}, Subplots A and B.

Some of the difference between the initial mutual information states for network and dataset combinations can be explained by the stochasticity in neural networks - that is, the weights of the network are initialized randomly, and samples are chosen at random. 
As a result, it is crucial to look at how the networks converge, and after 1000 epochs, the key factor between networks appears to be architecture - that is, whether the network is feedforward or convolutional - rather than the representation of the data itself.
Due to the remarkable similarity of the information plane, we conclude that the representations learned by the neural network are related more to latent structure in the data than to the specific values of the input data.

Further, for all of our neural networks, our information plane is quite similar, and converges to exactly the same value for mutual information in the densely connected hidden layers.
The exception is the summary dataset, which is the only dataset whose representation is not the result of a homeomorphic transformation.
We also note that this is the dataset with the worst evaluation accuracy.
%
\chapter{Experiment 2 - Dataset and in-Network Transformation of MNIST}
\label{chap:five}

\section{Motivation}
In order to contextualize the results of the experiments conducted on malware data, we consider the methods presented on the well-studied MNIST Database of Handwritten Digits~\cite{lecun1998mnist}.
MNIST is a benchmark in computer vision - since our baseline convolutional neural network is based on LeNet~\cite{lecun1998gradient}, we have a large body of research to compare to.
Additionally, MNIST serves as an introduction to the field of computer vision for many students and so our architectures and theories can be made more accessible in that context.
The current state of the art for MNIST achieved a 99.84 accuracy this year~\cite{byerly2020branching}.
The best results achieved in the original LeCun paper were 99.3\% accuracy; generally, accuracy greater than 97\% is considered to be good.

\section{Methodology}
In order to maintain consistency with our other findings, our methodology is the same as experiments in Chapter~\ref{chap:three}.
We leverage the hardware described in Appendix~\ref{append:one} and perform two sub-experiments.
The goal of the first part of our experiment is to best fit the data while ensuring generalization, and our goal is to optimize accuracy on the test set.
The results of this experiment are contained in Table~\ref{Tab:test}.
In the second part of our experiment, we ran the same mutual information computation as described in Section~\ref{MI computation}, and those results are displayed in Figure~\ref{fig:mnist fc infoplane} and Figure~\ref{fig:mnist conv infoplane}.

\section{Results}
In this experiment, all of our models achieve accuracy over 97\% which is broadly considered to be the benchmark accuracy for ``passing'' MNIST. 
For all these combinations of architecture and transformation, the difference between our maximum accuracy score of 99.11\% and our minimum accuracy of 97.73\% is only 1.38\%.
Comparing the convolutional models in particular, the difference between the Raw, Fourier-transformed, and Wavelet-transformed data is only 0.01\%. 

\begin{table}[h!]
\centering	
\begin{tabular}{l|ll}
\textbf{Data and Architecture}  & \textbf{Test Accuracy} & \textbf{Mean Step Time} ($\mu$s) \\\cline{1-3}
Raw, Fully-Connected NN            & 97.73\%         & 29\\
Fourier, Fully-Connected NN        & 98.12\%         & 41\\
Wavelet, Fully-Connected NN        & 97.83\%         & 30\\
\hline
Raw, Convolutional NN              & 99.11\%         & 204\\ 
Fourier, Convolutional NN          & 99.10\%         & 237\\
Wavelet, Convolutional NN          & 99.10\%         & 212\\
\hline
Raw, Fourier NN                    & 98.45\%         & 959\\
Raw, Wavelet NN                    & 98.89\%         & 1068\\ 
\end{tabular}
\caption{Neural Network Results}
\label{Tab:test}
\end{table}

\begin{figure}[h]
\begin{center}
\includegraphics[width=\textwidth]{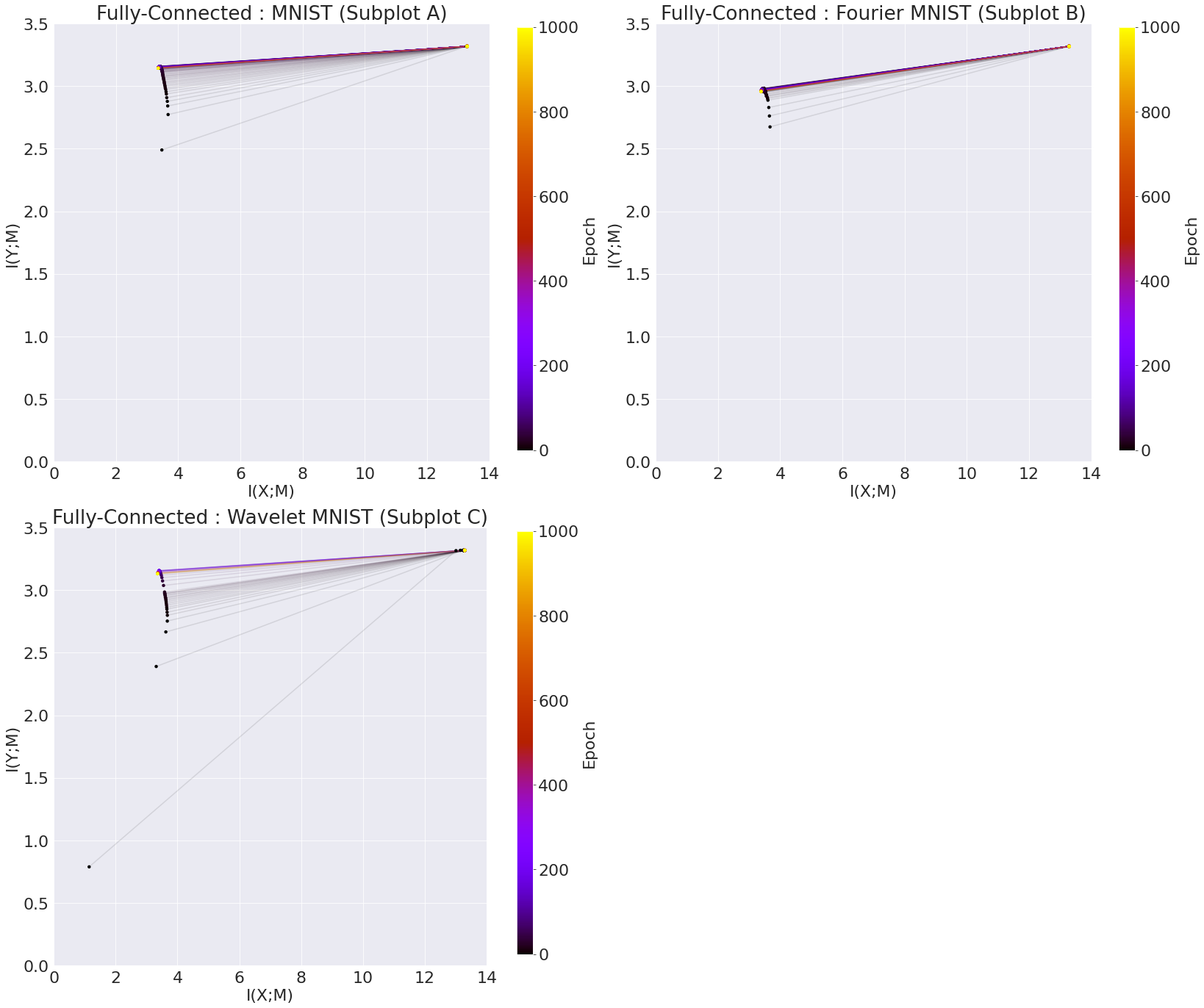}
\caption{Fully-Connected Neural Network Information Plane}
\label{fig:mnist fc infoplane}
\centering
\end{center}
\end{figure}
Figure~\ref{fig:mnist fc infoplane} was produced using the upper bound methodology from Saxe~\cite{saxe2019information}. 
We see from all three subplots in Figure ~\ref{fig:mnist fc infoplane} that over 1000 training epochs, the mutual information about the labels for the first and last layers of the neural net is quite similar for our fully-connected neural networks irrespective of the initial data.
We note that in Subplot C, the wavelet data yields a much lower amount of information about Y in the first training epoch, but quickly converges to the same point as the other two datasets, represented in subplots A and B.
As seen in both Shwartz-Ziv~\cite{shwartz2017opening} and Saxe~\cite{saxe2019information}, the early training epochs cause the largest increase in mutual information with respect to the labels, decreasing over training. 
We also observe a slight decrease in the amount of mutual information with respect to the data in later training epochs - which is expected as the learned representation becomes better able to map data to labels. 
The fully connected network converges to an upper bound for the output layer which is within half a bit across all three datasets - raw, Fourier-transformed, and Wavelet-transformed - within the first 500 epochs and begin to reduce their mutual information about $X$ as the network overfits the dataset.
Since we use ReLU activation functions, we do not see a ``fitting phase and compression phase'' as observed in Tishby~\cite{tishby2015deep}, but instead a simultaneous ``fitting and compression'' as in Saxe.

\begin{figure}[h!]
\begin{center}
\includegraphics[width=\textwidth]{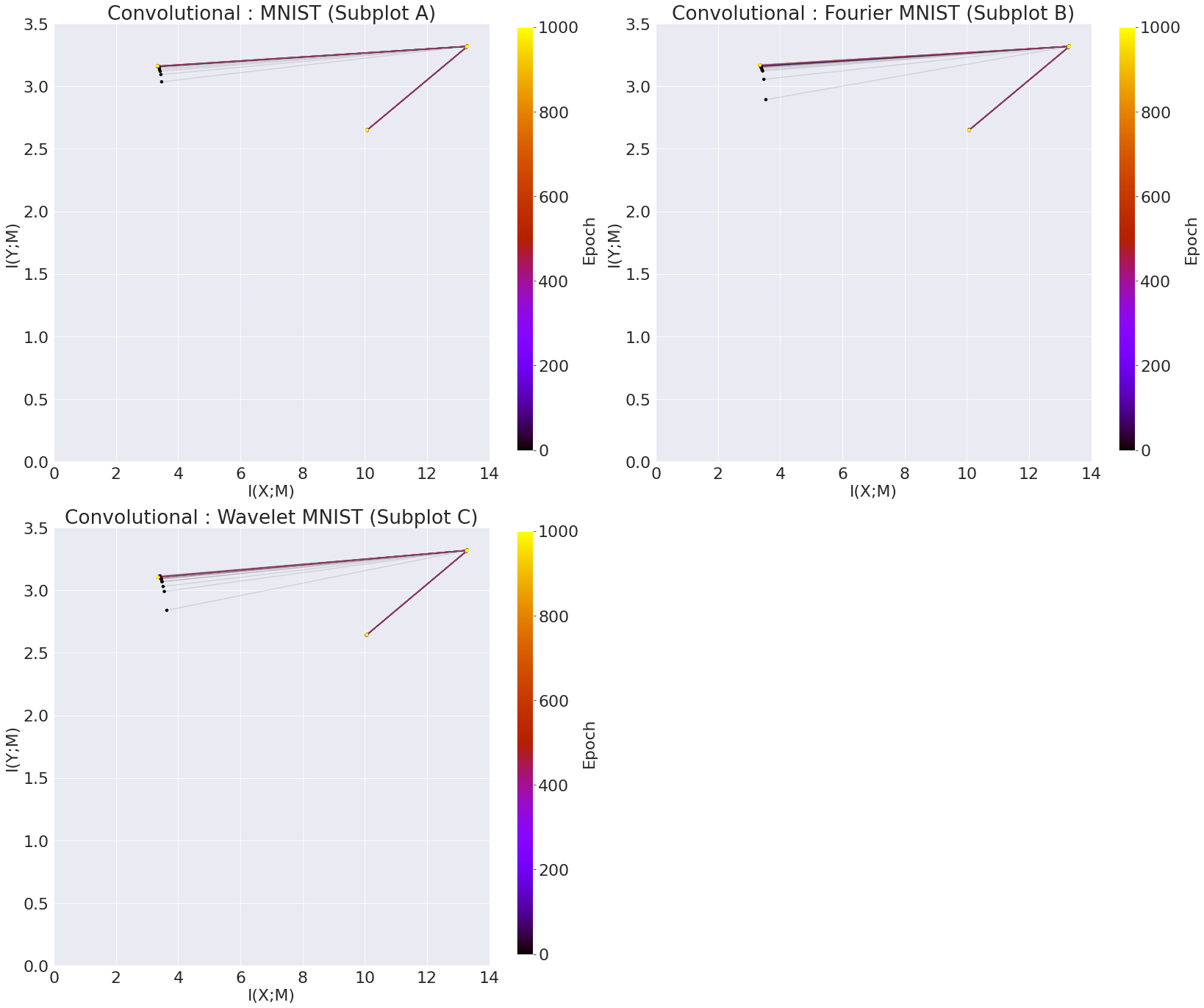}
\caption{Convolutional Neural Network Information Plane}
\label{fig:mnist conv infoplane}
\centering
\end{center}
\end{figure}

We observe very similar results in Figure~\ref{fig:mnist conv infoplane} with the convolutional network, and we can see that the convolutional layers, which for all 3 data representations, sits near the point $(10, 2.6)$ throughout training, do not budge from that point. 
Meanwhile, the densely connected layers sit at the same point in the information plane as in Figure~\ref{fig:mnist fc infoplane}.
We can also see that the output layer on the left moves to nearly the same point for all 3 models.
%
%
\chapter{Conclusions and Further Work}
\label{chap:conclusion}

\section{What a Neural Network Learns}
In Section~\ref{MNI_IB}, we show that that the representation learned by the neural network, $Z$ must constitute a minimum sufficient statistic of $X$ in order for the network to be predictive with respect to the information bottleneck optimality equation. 
Moreover, we demonstrate that the mutual information satisfies the data processing inequality with respect to the Markov chain $Y \to X \to Z$: $I(Y; X) \geq I(Y; Z)$.
Since the invariance of mutual information under homeomorphism allows us to affirm that any smooth, uniquely invertible map on $X$ does not impact the ability of a network to learn a representation, we have demonstrated that only methods of feature extraction~\cite{goodfellow2016deep} which change the data in ways that meaningfully change the entropy of $X$ are useful for altering the prediction accuracy of the network.
We know from natural language processing that some types of feature extraction which are not invertible improve the accuracy of prediction~\cite{goodfellow2016deep}, but given the results of our summary statistic data in Section~\ref{malware results}, not all feature extraction methods are equally valid.
This also makes intuitive sense from the standpoint of the data processing inequality outlined in Section~\ref{MNI_IB}, Equation~\ref{eqn:DPI}.
This strengthens the theory~\cite{krizhevsky2012imagenet} that probability mass is concentrated in locally-connected regions approximated by small manifolds with significantly lower dimensionality than $X$ itself, since these submanifolds would be preserved under this transformation.

Based on the results of our experiments in Chapter~\ref{chap:three} and Chapter~\ref{chap:five}, we observe that transformation of the dataset under homeomorphism has very little impact on the information plane.
Since the information plane is largely unchanged, and our accuracy remains quite similar, we conjecture that a smooth, bijective map applied to a dataset does not impact the ability of a neural network to learn a representation.
A rigorous proof of this conjecture is reserved for future work.

We used the convolution theorem to process smaller datasets than in Pratt~\cite{pratt2017fcnn} and Fujieda~\cite{fujieda2017wavelet} and found no loss of mutual information or accuracy.
However, we do not observe the speed increases in the previous work, possibly due to the disparity in our data size - the overhead of the transform and the inverse transform is larger than the improved speed of dot product over convolution.
This suggests that leveraging the convolution theorem to reduce computational load on large datasets may be worthwhile since we improve the speed of computation with no loss of information but is inefficient on smaller datasets.

\section{Malware Data Experiments}
In our malware data experiments, no neural network was able to match or surpass the accuracy of the random forest.
Additionally, the random forest is a model that is more interpretable, and trains much more quickly - two features that are highly desirable in information security.
No optimization was done on the hyperparameters of the decision tree, and so it is likely that a decision tree trained on raw data could achieve even higher accuracy results than were achieved in Chapter~\ref{chap:three}. 
Since each observation in our data is independent of the observation before it, the relationships are not complex and so it is plausible that a decision tree-based model could be architecturally optimal for our problem.

Our summary statistic dataset provided the most data interpretability from a human standpoint, and per model, provided the worst results. 
This result demonstrates that human interpretability of the data does not necessarily enhance the ability of neural networks to learn, even in relatively low dimensional spaces.
Some opportunity exists to enhance neural network-based detection, but this would likely require significantly larger volumes of data and more homogeneity between samples.
Further work could also be done to do manual feature extraction or additional correlation of metadata to improve detection rates.

\section{Further Work}
Our experiments contextualize results from experiments on the EMBER dataset performed by Anderson, Raff, and previous work done by the author of this paper~\cite{anderson2018ember, raff2018malware, galinkin2019shape}.
Anderson found that features extracted by experts with some light preprocessing outperformed featureless end-to-end deep learning even in spite of the ``natural'' feature extraction found in convolutional neural networks~\cite{he2016deep}.
Our previous work found that raw bytes are generally not a robust feature for malware detection, even if the support of the convolutional filter is considered and the filter shape is optimized for the target.
Our results here suggest that there may be some relevant change to the entropy when the executable is parsed as in Anderson's work.
This research serves as an avenue for future work.


There are implications of taking a manifold view in the space of adversarial examples~\cite{szegedy2013intriguing} that could allow us to minimize the dimension of the manifold and the order of the coordinate system, smoothing the loss surface, reducing the efficacy of gradient-based attacks~\cite{athalye2018obfuscated}.
This application has valuable contributions to the defense of machine learning systems, a threat that organizations are not prepared for~\cite{kumar2020adversarial}.
By using the ideas of a projection onto a manifold, we can categorize networks and datasets that might prove susceptible to adversarial examples.
Additionally, since we seek to minimize the information in our learned representation, model inversion attacks~\cite{zhang2019secret} become more challenging.

Our neural network's poor accuracy and the success of the random forest classifier also provide a potential avenue for further study.
If our random forest classifier learns rules that partition the dataset, rather than a function that maps inputs to labels, it may be architecturally optimal. 
This would require a multi-dimensional analysis of the data and examining in-depth the branching points of the random forest classifier.
Though this work is outside the scope of this thesis, it would provide an insight into when and why to choose certain machine learning models given properties of the dataset.

Finally, when plotting the information plane, the parameter that seemed to have the greatest effect on the magnitude of the changes was the number of neurons, especially in the case of a feed-forward neural network.
We observed that using very small numbers of neurons by modern neural network sizes: 4 to 16 neurons per layer, for example, we saw much lower initial levels of mutual information, which would still eventually converge to the same points. 
We did not explore why this is the case, and reserve investigation of this phenomenon to future work.
%
%
%
%
\bibliographystyle{IEEEtran}
%
\setbibref{References}
\bibliography{bibtex}
%
\begin{appendices}
\footnotesize
\addtocontents{toc}{\protect\renewcommand\protect\cftchappresnum{\appendixname~}}
\renewcommand{\thechapter}{\Roman{chapter}}

\renewcommand{\thesection}{\Alph{section}.}

\chapter{Hardware}
\label{append:one}

All models were trained on the same hardware with the following specifications:

\quad\textbf{CPU}: AMD Ryzen Threadripper 2920x 12-core 3.5 GHz

\quad\textbf{RAM}: 128 GB 3200 MHz DDR4

\quad\textbf{GPU}: Nvidia RTX 2080 Ti 12GB

All models for which the software was compatible with GPU were trained on GPU.
Datasets were all small enough to be held in memory after having been read from disk so disk i/o latency was not a factor in training times.
\end{appendices}
%
\chap{Biographical sketch}
\normalsize
\nonfrenchspacing
\doublespacing
\setlength{\parindent}{15pt} 
\setlength{\parskip}{3pt} 

Erick Galinkin was born in Port Jefferson, New York just in time for coverage of Tiananmen square to be televised.
After graduating from Miller Place High School in 2007, he spent a brief time at Stony Brook University where he studied Chemistry and conducted research into electrical properties of organometallics under Dr. Fernando O. Raineri. 
At the end of 2008, he joined the United States Air Force as a Chinese Language Analyst and spent 6 years enlisted, obtaining an Associate's Degree in Mandarin Chinese from the Defense Language Institute and a Bachelor's Degree in Cybersecurity from the University of Maryland Global Campus during this time.
More importantly, Erick's first son, Noble was born in December of 2010 and his middle son, Jameson was born in September of 2012.
Upon separating from the Air Force in 2014, Erick began working as a Research Engineer for Cisco Systems and shortly thereafter began pursuing a masters in information assurance and a graduate certificate in Bioinformatics, graduating in 2017 just before the birth of his youngest son, Beckett.
Beginning in September 2018, Erick enrolled at Johns Hopkins and started work as a Security Research Scientist at Netskope.
In January, 2020 he began work as a Principal Artificial Intelligence Researcher at Rapid7 and has accepted an offer to further his studies as a PhD student in Computer Science at Drexel University.
\end{document}